\documentclass{article}

\usepackage{microtype}
\usepackage{graphicx}
\usepackage{subcaption}
\usepackage{booktabs} % for professional tables

\usepackage{hyperref}

\usepackage[accepted]{icml2026}

\usepackage{amsmath}
\usepackage{amssymb}
\usepackage{mathtools}
\usepackage{amsthm}

\usepackage[capitalize,noabbrev]{cleveref}

\theoremstyle{plain}
\newtheorem{theorem}{Theorem}[section]
\newtheorem{proposition}[theorem]{Proposition}

\theoremstyle{definition}

\theoremstyle{remark}

\usepackage[textsize=tiny]{todonotes}

\usepackage{xspace}
\usepackage{titletoc}
\usepackage[page, header]{appendix}

\usepackage{thm-restate}

\usepackage{enumitem}
\usepackage{threeparttable}
\usepackage{multirow, multicol}
\usepackage{makecell}
\usepackage{mdframed}

\usepackage{float}
\usepackage{wrapfig}

\usepackage{bbding}

\usepackage{tikz}
\usetikzlibrary{positioning}
\usetikzlibrary{decorations.pathreplacing, decorations.pathmorphing}
\usetikzlibrary{arrows.meta}
\usetikzlibrary{patterns, patterns.meta}
\usetikzlibrary{calc}
\usetikzlibrary{shadows}
\usetikzlibrary{calligraphy}
\usetikzlibrary{matrix}
\usetikzlibrary{fit}

\makeatletter
\newcommand{\newreptheorem}[2]{\newtheorem*{rep@#1}{\rep@title}\newenvironment{rep#1}[1]{\def\rep@title{#2 \ref*{##1}}\begin{rep@#1}}{\end{rep@#1}}}
\makeatother

\newcommand{\compclass}[1]{\textsf{#1}}

\newcommand{\func}[1]{\mathrm{#1}}

\newcommand{\bs}[1]{\boldsymbol{#1}}
\newcommand{\set}[1]{\left\{#1\right\}}

\newcommand{\poly}{\mathrm{poly}}
\newcommand{\softmax}[1]{\mathrm{softmax}\left(#1\right)\xspace}

\newcommand{\emb}{\mathrm{emb}\xspace}

\icmltitlerunning{Position: The Turing-Completeness of Autoregressive Transformers Relies Heavily on Context Management}

\begin{document}

\twocolumn[
    \icmltitle{Position: The Turing-Completeness of Autoregressive Transformers Relies Heavily on Context Management}
    
    % It is OKAY to include author information, even for blind submissions: the
    % style file will automatically remove it for you unless you've provided
    % the [accepted] option to the icml2026 package.
    
    % List of affiliations: The first argument should be a (short) identifier you
    % will use later to specify author affiliations Academic affiliations
    % should list Department, University, City, Region, Country Industry
    % affiliations should list Company, City, Region, Country
    
    % You can specify symbols, otherwise they are numbered in order. Ideally, you
    % should not use this facility. Affiliations will be numbered in order of
    % appearance and this is the preferred way.
    \icmlsetsymbol{equal}{*}
    
    \begin{icmlauthorlist}
    \icmlauthor{Guanyu Cui}{gsai}
    \icmlauthor{Zhewei Wei}{gsai}
    \icmlauthor{Kun He}{deke}
    \end{icmlauthorlist}
    
    \icmlaffiliation{gsai}{Gaoling School of Artificial Intelligence, Renmin University of China, Beijing, China}
    \icmlaffiliation{deke}{DEKE Lab, Renmin University of China, Beijing, China}
    
    \icmlcorrespondingauthor{Zhewei Wei}{zhewei@ruc.edu.cn}
    \icmlcorrespondingauthor{Kun He}{hekun2023@ruc.edu.cn}

    % You may provide any keywords that you find helpful for describing your
    % paper; these are used to populate the "keywords" metadata in the PDF but
    % will not be shown in the document
    \icmlkeywords{Turing-Completeness, Autoregressive Transformers, Context Management}
    
    \vskip 0.3in
]

% this must go after the closing bracket ] following \twocolumn[ ...

% This command actually creates the footnote in the first column listing the
% affiliations and the copyright notice. The command takes one argument, which
% is text to display at the start of the footnote. The \icmlEqualContribution
% command is standard text for equal contribution. Remove it (just {}) if you
% do not need this facility.

% Use ONE of the following lines. DO NOT remove the command.
% If you have no special notice, KEEP empty braces:
\printAffiliationsAndNotice{}  % no special notice (required even if empty)
% Or, if applicable, use the standard equal contribution text:
% \printAffiliationsAndNotice{\icmlEqualContribution}

\begin{abstract}
Many works make the eye-catching claim that Transformers are Turing-complete. However, the literature often conflates two distinct settings: (i) a \emph{fixed Transformer system} setting, in which a fixed autoregressive Transformer is coupled with a fixed context-management method to process inputs of different lengths step by step, and (ii) a \emph{scaling-family} setting, in which a family of different models (with increasing context-window length or numerical precision) is used to handle different input lengths. Existing proofs of Transformer Turing-completeness are frequently established in setting (ii), whereas real-world LLM deployment and the standard notion of Turing-completeness correspond more naturally to setting (i). In this paper, we first formalize the fixed-system setting, thereby providing a concrete characterization of how real-world LLMs operate. We then argue that results proved in the scaling-family setting provide theoretically meaningful resource bounds but do not establish Turing-completeness, thereby clarifying a common misinterpretation of existing results. Finally, we show that different context-management methods can yield sharply different computational power, and we advocate the position that context management is a central component that critically determines the computational power of real-world autoregressive Transformers.
\end{abstract}

\section{Introduction}
\label{sec:intro}
The computational power of Transformer variants has been studied extensively, motivated by the striking capabilities of Transformer-based large language models (LLMs).
A prominent line of work claims that Transformers are Turing-complete~\cite{dehghani2018universal, perez2018on, bhattamishra2020computational, giannou2023looped, merrill2024the, li2024chain, roberts2024powerful, deluca2024simulation, malach2024auto, nowak2024representational, qiu2025ask, hou2025universal, li2025constant, jiang2026softmax, schuurmans2023memory, schuurmans2024autoregressive}.
In other words, Transformers are expressive enough to compute any computable function.
However, existing proofs typically rely on assumptions that are not fully justified for real-world LLMs, which correspondingly weakens the solidity of these Turing-completeness claims.
This leaves a natural question: \textit{Do the theoretical assumptions made in existing Turing-completeness claims actually apply to autoregressive Transformers as they are used in practice?}

In prior work, it is common to adopt assumptions such as allowing the Transformer context window to grow with the decoding step, or requiring the numerical precision of token embeddings to increase with the number of tokens processed so far.
However, such assumptions do not characterize the capability of a single Transformer; rather, they effectively describe the computational power of a collection of different Transformers used at different input lengths.
As a result, these works study an object that deviates from the single-Transformer setting we aim to understand, and therefore do not reflect how LLMs are used in practice.
Motivated by this observation, we categorize existing works into two regimes depending on whether the Transformer is assumed to be fixed: a \emph{fixed-system} regime and a \emph{scaling-family} regime.

In the fixed-system regime, one fixes a single pretrained Transformer with a fixed context window $N$, fixed finite numerical precision, and fixed weights, as in real-world Transformer-based LLMs.
Given an input token sequence, the model processes it autoregressively by placing tokens into the context window and appending the decoded token to the end of the current sequence until a stopping condition is reached.
However, when the length of input or intermediate results exceeds $N$, the current sequence cannot be fed into the context window directly. 
In this case, autoregressive decoding must be coupled with a fixed context-management mechanism that determines which tokens are provided to the context window at each step, thereby enabling the system to process arbitrarily long inputs.

On the other hand, in the scaling-family regime, one considers a family of models with different context-window length, numerical precision, or depth, and uses them to process inputs of different lengths.
Many universality results fall into this second regime, even when they are informally interpreted as statements about the first.
We summarize several common assumptions that, when adopted, move the analysis away from a single fixed pretrained model and implicitly place it in the scaling-family regime:
\begin{itemize}[leftmargin = *]
    \item \textbf{Scaling or unbounded window:} assuming that inputs of arbitrary length fit into the context window, or that during autoregressive decoding each token can attend to the entire input and all previously generated tokens. 
    \item \textbf{Scaling or unbounded precision:} assuming internal representations require precision that grows with the input length, for example log-precision, or directly using unbounded rationals or real numbers.
\end{itemize}
In contrast to the above assumptions, a real-world Transformer has a fixed maximum context-window length and fixed internal numerical precision that does not vary with the input length. Hence, once these assumptions are adopted, the object of study is no longer a fixed real-world Transformer, but rather a family of Transformers, where different Transformers are used for sequences of different lengths.

\paragraph{Position and Contributions.}
We argue that theoretical studies of the computational power of Transformers should explicitly state their assumptions and distinguish scaling-family settings from realistic fixed-system settings.
Scaling or unbounded context windows, as well as growing or unbounded numerical precision, change the object of study from a single deployed Transformer system to a family of models.
For real-world LLMs, \textbf{the way context management, or more broadly the model's ``harness'', is coupled with the Transformer is not a peripheral implementation detail but a central component that can fundamentally change the induced computational model.}

We support this position through three contributions:
\begin{itemize}[leftmargin=*]
    \item We formalize a computational model for a fixed Transformer system.
    This provides a reference setting for discussing real-world autoregressive LLM deployment.
    \item We distinguish fixed-system and scaling-family regimes and explain how common assumptions such as scaling context windows and growing numerical precision should be interpreted. 
    Results in the scaling-family regime are meaningful, but they do not establish Turing-completeness of a fixed deployed Transformer system.
    \item Under the fixed-system formalization, we provide simple derivations showing that summarization-style context management yields only constant-space computation, while appending-style methods achieve linear space; more sophisticated mechanisms can even be Turing-complete. 
    These examples illustrate why context management is a central component of the induced computational model.
\end{itemize}

\paragraph{Related Work.}
Our work is primarily motivated by a blog post~\cite{akhlaghpour2024transformers} that informally argues that many Turing-completeness claims for Transformers involve implicit issues.
However, the blog post is informal and presents these concerns by surveying related works in chronological order.
In contrast, we provide an explicit formalization, clearly separate existing settings into the fixed-system and scaling-family regimes, and highlight a key observation largely absent from the blog discussion: how the choice of context manager can substantially affect the computational power of real-world LLM systems.
\paragraph{Roadmap.}
Section~\ref{sec:preliminaries} reviews the background needed for our formalization and analysis.
Section~\ref{sec:fixed-system} defines the computational process of a fixed Transformer system $(T, D, C)$.
Section~\ref{sec:alternative-views} explains why interpreting scaling-family results as ``autoregressive Transformers are Turing-complete'' reflects a misconception, and surveys representative claims by identifying the scaling assumptions they rely on.
Section~\ref{sec:context-management-matters} analyzes the computational power and separations of fixed Transformer systems under different context-management methods.
Finally, Section~\ref{sec:summary-call-to-action} summarizes our conclusion and presents a call to action.

\section{Preliminaries}
\label{sec:preliminaries}
This section reviews the background concepts that will be used later: (i) Turing machines (TMs), transducers, and complexity classes, (ii) Boolean circuits, and (iii) a minimal abstraction of Transformers and autoregressive decoding.
\subsection{Notation}
Let $\Sigma$ be a finite alphabet.
A string over $\Sigma$ is an element of $\Sigma^* \coloneqq \bigcup_{\ell=0}^{\infty} \Sigma^\ell$, where $\Sigma^0 = \set{\epsilon}$ and $\epsilon$ denotes the empty string. 
For a string $x$, let $|x|$ denote its length. 
For integers $1 \le a \le b \le |x|$, we write $x_{a: b}$ for the substring $x_a\cdots x_b$.
A language $L$ over $\Sigma$ is a set of strings, i.e., $L \subseteq \Sigma^*$.
A partial function from $A$ to $B$, denoted $g: A \rightharpoonup B$, is a mapping that may be undefined for some $a\in A$.
\subsection{TMs, Transducers, and Complexity Classes}
\textbf{Turing machines} (TMs) are a classical computational model proposed by Alan Turing~\cite{turing1937computable} to formalize the notion of computability. 
Informally, a (multi-tape) Turing machine consists of $k$ tapes, each extending infinitely in one direction and divided into cells that store symbols. 
Each tape is equipped with a read-write head that can move left or right.
Initially, the input is written on the tape(s) according to a fixed convention, and all other cells contain a blank symbol. 
At each step, the machine reads symbols, updates its state, writes symbols, and moves the heads according to a finite set of rules.
A \textbf{transducer} is a machine that maps an input string to an output string; one convenient model is a TM with a read-only input tape, a work tape, and a write-only output tape.
We use standard complexity classes such as $\compclass{DTIME}(t(n))$ and $\compclass{DSPACE}(s(n))$ for languages that can be decided deterministically in $O(t(n))$ time or $O(s(n))$ auxiliary space. 
We use $\compclass{FDTIME}(t(n))$ and $\compclass{FDSPACE}(s(n))$ for the corresponding function problems.
When discussing space complexity, we can ignore the constant number of tapes, since any $k$-tape Turing machine can be simulated by a single-tape one using at most a factor-$k$ increase in space.

\subsection{Boolean Circuits}
A \textbf{Boolean circuit} is a computational model for a Boolean function $f:\set{0, 1}^n \to \set{0, 1}^m$.
It can be represented as a finite directed acyclic graph (DAG) with $n$ source nodes $x_1, \cdots, x_n$ as inputs and $m$ sink nodes $y_1, \cdots, y_m$ as outputs. 
All other nodes are logic gates such as AND, OR, NOT, and edges are wires carrying bits.
Given an input assignment $x \in \set{0, 1}^n$ to the input nodes, each gate is evaluated by applying its operation to the values on its incoming wires, and the values at the output nodes define $f(x) \in \set{0, 1}^m$.
Unlike a single Turing machine which can process inputs of arbitrary length, a single Boolean circuit only handles inputs of a fixed length. 
Therefore, in circuit complexity, we typically study a \textbf{family} of circuits $\set{C_n : n\in\mathbb{N}}$, where $C_n$ handles inputs of length $n$.

\subsection{Turing-Completeness}
\label{subsec:turing-completeness}
\textbf{Turing-completeness} is a notion used to characterize whether a class of objects is at least as expressive as Turing machines. 
To define Turing-completeness, we first define Turing-computable functions. 
A partial function $f:\Sigma^* \rightharpoonup \Sigma^*$ is \textbf{Turing-computable} if there exists a TM $M$ such that for all $x, y \in \Sigma^*$, $f(x) = y$ if and only if $M$ halts on input $x$ and outputs $y$.
A collection of objects $\mathcal{N}$ is \textbf{Turing-complete} if for every Turing-computable partial function $f$ there exists an object $N\in\mathcal{N}$ that computes $f$.
From this definition, a necessary condition for Turing-completeness is that each object supports inputs of unbounded length.

\subsection{Transformers}
The Transformer~\cite{vaswani2017attention} model is a well-known neural network architecture designed for sequence modeling which relies primarily on self-attention mechanism to capture the content of the sentence and to generate results.
Transformer models can be broadly categorized by whether they include encoder layers and/or decoder layers into three families: encoder-only Transformers (e.g., BERT~\cite{devlin2019bert}, RoBERTa~\cite{liu2019roberta}), encoder-decoder Transformers (e.g., the vanilla Transformer~\cite{vaswani2017attention}, T5~\cite{raffel2020exploring}, BART~\cite{lewis2020bart}), and decoder-only Transformers (e.g., GPT~\cite{brown2020language, achiam2023gpt}, LLaMA~\cite{touvron2023llama}, Gemini~\cite{team2023gemini, team2024gemini}, DeepSeek~\cite{liu2024deepseek, guo2025deepseek}, and Qwen~\cite{bai2023qwen, yang2025qwen3}).

Given an input sentence, a tokenizer is applied before the Transformer to map the raw text to a sequence of discrete tokens from a finite vocabulary $\Sigma$.
This token sequence is then fed into the Transformer.
In practice, tokens are mapped to token IDs before being input to the Transformer.
Since this is only an encoding choice, we still treat $\Sigma$ as the token domain.
A pretrained decoder-only Transformer with context window length $N$ and fixed numerical precision can be abstracted as a fixed function $T$ that maps an $N$-token sequence\footnote{Although the Transformer's input length is fixed to $N$, in practice a single forward pass can process any input sequence of length at most $N$. If the input is shorter than $N$, it is padded with \texttt{<pad>}. When we say that a sequence of length $<N$ is placed into the context window, we omit the padding step for brevity.} to next-token logits, and hence to a distribution over $\Sigma$.
Formally, $T : \Sigma^N \to \Delta(\Sigma)$, where $\Delta(\Sigma)$ denotes the set of all distributions on $\Sigma$.
As a supplement, many Transformer architectures use positional encodings to incorporate token-position information, either at the embedding stage (e.g., the sinusoidal PE in the vanilla Transformer~\cite{vaswani2017attention} or the learned PE in BERT~\cite{devlin2019bert}) or within self-attention (e.g., RoPE~\cite{su2024roformer}).
In this case, we may regard the input within the context window as a length-$N$ string over $\Sigma \times \mathcal{P}$, where $\mathcal{P} = \set{0, 1}^{p_{\text{pos}}}$ for a fixed $p_{\text{pos}}$.
Thus, using positional encodings only enlarges the alphabet by a constant factor, and we will not mention them separately in what follows.
For our purposes, the architectural details of attention are not essential and are deferred to Appendix~\ref{app:transformers}. 
What matters is that (i) the input length (the context-window length) of a single Transformer is a fixed constant $N$, and (ii) all parameters and internal representations use fixed finite precision.

\subsection{Autoregressive Decoding and Context Management}
Decoder-only Transformers are typically used via autoregressive decoding. 
Given a current token sequence $x = x_1\cdots x_\ell$, we feed $x$ into the Transformer $T$ to obtain a next-token distribution $T(x)$, apply a decoding rule $D$ to select the next token $x_{\ell + 1}$, and append it to the end of $x$ to form the new sequence, iterating this procedure.
This process is well defined when the sequence length does not exceed the context-window length $N$. 
To continue decoding when the sequence length exceeds $N$, one must introduce a context-management mechanism $C$ that determines which tokens are provided to the context window at each step.
Section~\ref{sec:fixed-system} formalizes this computation process as a fixed system.

\section{The Fixed-System Regime for Transformers}
\label{sec:fixed-system}
We now formalize computation of autoregressive Transformers in the fixed-system regime.
The goal is to make explicit what is fixed and what can grow with the decoding steps.

\subsection{The System $(T, D, C)$ and Its Execution}
Fix a finite token vocabulary $\Sigma$ and a context window length $N$.
A \emph{fixed Transformer system} is a triple $(T, D, C)$:
\begin{itemize}[leftmargin=*]
    \item $T$ is a fixed pretrained Transformer, viewed as a function that maps any string $w\in \Sigma^{N}$ in its context window to a distribution $\mu_w \in \Delta(\Sigma)$ over the next token.
    \item $D$ is a deterministic\footnote{For simplicity, we assume the next-token selection rule is deterministic and single-valued. If not, one can adopt a nondeterministic-TM-style convention and define the Transformer's output as the set of all valid next tokens.} constant-time decoding rule that maps a distribution over $\Sigma$ to an output token in $\Sigma$.
    Typical choices include greedy decoding ($\arg\max$) or a fixed-temperature softmax sampling with a fixed seed.
    \item $C$ is a deterministic context manager. 
    At step $t$, it maintains internal state and strings $r^{(t)} \in \Sigma^*$, produces a context string $w^{(t)} \in \Sigma^N$ via $w^{(t)} = C_w\left(r^{(t)}\right)$, and then updates its state and strings after decoding $\hat{x}_{t+1}$ by setting $r^{(t+1)} = C_r\left(\hat{x}_{t+1}, r^{(t)}\right) \in \Sigma^*$.
\end{itemize}
Given an input string $x = x_1 x_2 \cdots x_n \in \Sigma^*$, we feed it to $C$ and set $r^{(1)} = x$.
The system then repeatedly applies $w^{(t)} = C_w\left(r^{(t)}\right)$ to form the token sequence fed into the context window, $\hat{x}_{t + 1} = D\left(T\left(w^{(t)}\right)\right)$ to obtain the next token, and $r^{(t + 1)} = C_r\left(\hat{x}_{t + 1}, r^{(t)}\right)$ to update its maintained strings, until a stopping condition of $C$ is met.
The system output is the generated token sequence (or a final answer extracted by a fixed convention).
For decision problems, we can designate special accept and reject tokens and say that the system decides a language by eventually emitting exactly one of them.
For function computation, we view the system as a transducer that outputs a string.

\subsection{Some Common Context-Management Methods}
\label{subsec:context-management-methods}
We summarize several common context-management methods that will be referenced later.
\paragraph{Summarization-Style.}
A common approach is to compress earlier parts of the history via summarization, e.g., latent compression methods such as AutoCompressor~\cite{chevalier2023adapting} and ICAE~\cite{ge2024incontext}, or the \texttt{/compress} or \texttt{/compact} commands as used in Gemini-CLI~\cite{google_gemini_cli_commands}, OpenAI Codex CLI~\cite{openai_codex_cli_slash_commands}, and Claude Code~\cite{anthropic_claude_code_commands}, which replaces part of the earlier context with a shorter summary.
Given an input sequence $x = x_1\cdots x_n \in \Sigma^*$ and setting $r^{(1)} = x$, an example summarization-style context-management process operates as follows:
\begin{itemize}[leftmargin=*]
    \item \textbf{Normal decoding phase.} If $|r^{(t)}| < N - 1$, the context manager provides $w^{(t)} = r^{(t)}$ to the Transformer.
    Given the decoded next token $\hat{x}_{t + 1}$, there are two cases. 
    If $\hat{x}_{t + 1}$ is the end-of-sequence token \texttt{<EOS>}, the system terminates. 
    Otherwise, set $r^{(t + 1)} = r^{(t)} \circ \hat{x}_{t+1}$.
    \item \textbf{Summarization phase.} If $|r^{(t)}| \ge N - 1$, the context manager provides $w^{(t)} = \texttt{<s>} \circ r^{(t)}_{1:N-1}$ to the Transformer, where \texttt{<s>} is a summary-instruction token that triggers summarization.
    The Transformer then decodes $\hat{x}_{t+1}$ as a summary of the contents in the window, and we update $r^{(t + 1)} = \hat{x}_{t+1} \circ r^{(t)}_{N : |r^{(t)}|}$.
    For simplicity, we assume that the summary is a single token.
    To generate a multi-token summary, we can reserve a budget of $t$ tokens (with $1 < t \le N/2$) in the context window, that is, trigger summarization when $|r^{(t)}| \ge N - t$ instead of $N - 1$.
    Once summarization is triggered, the model can generate a length-$<t$ summary autoregressively (i.e., by invoking the normal decoding phase as a subroutine) and prepend the resulting summary sequence to $r^{(t)}$.
\end{itemize}
Figure~\ref{fig:summarization-style} illustrates the summarization-style context manager described above.
\begin{figure}[ht]
    \centering
    \begin{tikzpicture}[scale = .92, transform shape]
        % 起点、边长、数量、边
        \newcommand{\Tokens}[4]{%
            \pgfmathtruncatemacro{\B}{#3}%
            \ifnum\B>0
                \pgfmathtruncatemacro{\N}{\B-1}%
                \begin{scope}[shift={#1}]
                    \foreach \i in {0,...,\N}{%
                        \draw[#4] (\i * #2,0) rectangle ++(#2, #2);%
                    }%
                \end{scope}
            \fi
        }
        \begin{scope}
            \draw[rounded corners, line width = 1pt, draw = black,
              top color = purple!35, bottom color = cyan!35, shading angle = 25]
              (0, 0) rectangle node {\large Transformer} (2.6, 1);
            \draw[rounded corners, line width = .8pt, fill = gray!10] (0, -.7) rectangle (2.6, -.1);
            \Tokens{(.1, -.6)}{.4cm}{3}{thick, fill = white}
            \Tokens{(.1, -1.5)}{.4cm}{3}{thick, fill = white}
    
            \draw [-Latex, very thick] (.7, -1.1) -- ++(0, .5);
    
            \draw [-Latex, very thick] (1.3, 1) -- ++(0, .2) -- ++(1.5, 0) -- ++(0, -2.5) -- (1.9, -1.3);
            \draw[dashed, thick, fill = yellow!40] (1.5, -1.5) rectangle node{$\hat{x}$} ++(.4, .4);
        \end{scope}

        \draw [very thick, dashed] (3.2, -1.6) -- ++(0, 3.1);

        \begin{scope}[xshift = 3.5cm]
            \draw[rounded corners, line width = 1pt, draw = black,
              top color = purple!35, bottom color = cyan!35, shading angle = 25]
              (0, 0) rectangle node {\large Transformer} (2.6, 1);
            \draw[rounded corners, line width = .8pt, fill = gray!10] (0, -.7) rectangle (2.6, -.1);
        
            \Tokens{(.1, -.6)}{.4cm}{1}{thick, fill = red!40}
            \Tokens{(.5, -.6)}{.4cm}{5}{thick, fill = teal!40}
        
            \Tokens{(.5, -1.5)}{.4cm}{5}{thick, fill = teal!40}
            \Tokens{(2.5, -1.5)}{.4cm}{2}{thick, fill = gray!40}
            \draw [very thick] (2.5, -1.6) -- ++(0, .6);
            \draw [-Latex, very thick] (1.3, -1.1) -- ++(0, .5);
        
            \draw [-Latex, very thick] (1.3, 1) -- ++(0, .2) -- ++(1.5, 0);
            \draw[dashed, thick, fill = yellow!30] (2.8, 1.) rectangle node{$\hat{x}$} ++(.4, .4);
        
            \node at (3.7, -1.325) {$\Longrightarrow$};
            %\Tokens{(4.1, -1.5)}{.4cm}{1}{thick, fill = yellow!40}
            \draw[thick, fill = yellow!40] (4.1, -1.5) rectangle node{$\hat{x}$} ++(.4, .4);
            \Tokens{(4.5, -1.5)}{.4cm}{2}{thick, fill = gray!40}
        \end{scope}
        
    \end{tikzpicture}
    \caption{Illustration of summarization-style context management. Left: normal decoding phase. Right: summarization phase. Teal tokens indicate the segment to be summarized, the red token is the summary-instruction token \texttt{<s>}, and the yellow token denotes the generated summary $\hat{x}$.}
    \label{fig:summarization-style}
    \vspace{-.6cm}
\end{figure}

\paragraph{Appending-Style.}
Another method, similar to that used by Schuurmans et al.~\cite{schuurmans2024autoregressive}, treats the Transformer's context window as a sliding window and appends predicted tokens to the end of the sequence.
Given an input sequence $x = x_1\cdots x_n \in \Sigma^*$ and setting $r^{(1)} = x$, an example appending-style context-management process operates as follows.
At step $t$, the context manager provides $w^{(t)} = r^{(t)}_{1:\min\set{|r^{(t)}|, N}}$ to the Transformer and obtains $\hat{x}_{t + 1}$.
If $\hat{x}_{t+1}$ is a halting token in a designated set $H$, the system terminates.
Otherwise, it appends the decoded token to the end of $r^{(t)}$ and shifts the window forward by one token, i.e., $r^{(t+1)} = r^{(t)}_{2:|r^{(t)}|} \circ \hat{x}_{t+1}$.
Figure~\ref{fig:appending-style} illustrates the appending-style context manager described above.

\begin{figure}[ht]
    \centering
    \begin{tikzpicture}
        % 起点、边长、数量、边
        \newcommand{\Tokens}[4]{%
            \pgfmathtruncatemacro{\B}{#3}%
            \ifnum\B>0
                \pgfmathtruncatemacro{\N}{\B-1}%
                \begin{scope}[shift={#1}]
                    \foreach \i in {0,...,\N}{%
                        \draw[#4] (\i * #2,0) rectangle ++(#2, #2);%
                    }%
                \end{scope}
            \fi
        }

        \begin{scope}
            \draw[rounded corners, line width = 1pt, draw = black,
                  top color = purple!35, bottom color = cyan!35, shading angle = 25]
                  (0, 0) rectangle node {\large Transformer} (2.6, 1);
            \draw[rounded corners, line width = .8pt, fill = gray!10] (0, -.7) rectangle (2.6, -.1);
        
            \Tokens{(.1, -.6)}{.4cm}{6}{thick, fill = teal!40}
            
            \Tokens{(.1, -1.5)}{.4cm}{6}{thick, fill = teal!40}
            \Tokens{(2.5, -1.5)}{.4cm}{2}{thick, fill = gray!40}
            \draw[dashed, thick, fill = yellow!40] (3.3, -1.5) rectangle node{$\hat{x}$} ++(.4, .4);
            \draw [very thick] (2.5, -1.6) -- ++(0, .6);
            \draw [-Latex, very thick] (1.3, -1.1) -- ++(.0, .5);
            \draw [-Latex, very thick] (1.3, 1) -- ++(0, .2) -- ++(2.2, 0) -- ++(0, -2.3);
        \end{scope}
    
        \node at (4.1, -1.325) {$\Longrightarrow$};

        \begin{scope}[xshift = 4.5cm]
            \draw[rounded corners, line width = 1pt, draw = black,
                  top color = purple!35, bottom color = cyan!35, shading angle = 25]
                  (.0, 0) rectangle node {\large Transformer} (2.6, 1);
            \draw[rounded corners, line width = .8pt, fill = gray!10] (0, -.7) rectangle (2.6, -.1);
        
            \Tokens{(0.1, -1.5)}{.4cm}{5}{thick, fill = teal!40}
            \Tokens{(2.1, -1.5)}{.4cm}{2}{thick, fill = gray!40}
            \draw[thick, fill = yellow!40] (2.9, -1.5) rectangle node {$\hat{x}$} ++(.4cm, .4cm);
        \end{scope}
        
    \end{tikzpicture}
    \caption{Illustration of appending-style context management. Left: decode a new token $\hat{x}$ to append to the end of the sequence. Right: shift the window forward by $1$ token.}
    \label{fig:appending-style}
    \vspace{-.3cm}
\end{figure}

\paragraph{Other Methods and Remark on the Power of $C$.}
Beyond the two methods mentioned above, many other context-management mechanisms have been proposed.
For example, many systems write important information to external storage and retrieve it when needed, as in retrieval-augmented generation and external-memory-based agents~\cite{guu2020retrieval, lewis2020retrieval, packer2023memgpt}.
Other systems, such as ToolFormer~\cite{schick2023toolformer}, GPT~\cite{openai_function_calling}, Claude~\cite{anthropic_claude_tool_use}, and Gemini~\cite{google_gemini_function_calling}, also incorporate tool calls or function calls to access external resources.
We remark that the definition of a context manager $C$ is very broad.
With such an unrestricted $C$, the overall system can become trivially powerful.
For example, if $C$ can call a Turing-complete tool (e.g., by executing Python programs), then $(T, D, C)$ is Turing-complete regardless of $T$.
Therefore, when we analyze several of these methods in Section~\ref{sec:context-management-matters}, we explicitly restrict the capabilities of $C$, while still covering common deployed mechanisms such as summarization and sliding windows.

%\section{Alternative Views: Autoregressive Transformers Are Turing-Complete}
\section{Alternative Views}
\label{sec:alternative-views}
Many works make (or are commonly interpreted as making) the claim: \emph{Autoregressive Transformers are Turing-complete.}
To interpret this claim precisely, one must specify the computational regime.
In the fixed-system regime, Turing-completeness would mean that for any Turing machine, there exists a single fixed system $(T, D, C)$ can correctly simulate it on inputs of arbitrary length.
In contrast, in the scaling-family regime, it suffices to prove that for each input length $n$ (or number of steps $t$), there exists a Transformer model whose context-window length or numerical precision depends on $n$ (or $t$) that simulates the desired computation.
Only the fixed-system regime matches the way real-world LLMs are deployed, where the pretrained model is fixed and inputs are handled by a fixed context-manager.

In this section, we first clarify why Turing machine simulation results proved in the scaling-family regime should not be interpreted as establishing Turing-completeness.
We then survey representative works and identify whether they implicitly adopt the assumptions listed in Section~\ref{sec:intro}, which place their analysis in the scaling-family regime and therefore do not by themselves establish Turing-completeness of a fixed autoregressive Transformer system.

\subsection{Scaling Models Does Not Mean Turing-Complete}
By Definition~\ref{subsec:turing-completeness}, to establish Turing-completeness for autoregressive Transformers in the fixed-system regime, one must show that for every Turing-computable partial function $f$ there exists a single system $(T, D, C)$ that computes $f$ on inputs of unbounded length.
In contrast, showing that longer inputs or longer decoding runs can be simulated by a scaled model only yields a resource bound for Transformers to handle different input lengths. 
This is closer in spirit to a circuit-complexity result.
Savage~\cite{savage1972computational} showed that if a language $L$ is decidable in time $\compclass{DTIME}(T(n))$, then for each $n$ there exists a circuit $C_n$ of size $O((T(n))^2)$ that decides $L$ on length-$n$ inputs.
Scaling circuit size with $n$ does not make a single circuit universal. 
It produces a family of circuits, one for each input length.
Similarly, scaling the context-window length or the numerical precision yields a scaling-family viewpoint rather than Turing-completeness of a fixed pretrained Transformer system.
Beyond this conceptual misalignment, real-world LLMs operate with a fixed finite context window and rely on context management to handle inputs of arbitrary length, whereas the scaling-family regime assumes that arbitrarily long histories can be placed into the context window. 
This also departs from how LLMs are used in practice.

\subsection{Works That Scale Transformers}
\label{subsec:scaling}
In this subsection, we review representative works that claim Turing-completeness for Transformers, or claim that Turing machines can be simulated by Transformers, and identify which ones explicitly or implicitly fall into the scaling-family regime. 
We group these works into two categories.

\paragraph{Group A: Scaling Window.}
A large fraction of ``Transformer simulates TM'' arguments implicitly assume that, given an input sequence of length $n$, at decoding step $t$ the token at position $n+t$ can attend to all tokens in the sequence within a single Transformer forward pass. This effectively requires the context-window length at step $t$ to be at least $n+t$.
Works that explicitly or implicitly adopt this assumption include~\cite{dehghani2018universal, perez2018on, bhattamishra2020computational, merrill2024the, li2024chain, roberts2024powerful, malach2024auto, nowak2024representational, qiu2025ask, hou2025universal, jiang2026softmax}.
This group also includes looped or recurrent Transformer variants that repeatedly modify a fixed length input. 
Although the context-window length may not grow with the number of decoding steps in these formulations, handling longer inputs still requires a model with a larger context window. 
Thus, these works are also scaling the window in essence.
Works that explicitly or implicitly adopt this assumption include~\cite{giannou2023looped, deluca2024simulation, li2025constant}.

\paragraph{Group B: Scaling Precision.}
Several constructions require the precision of token embeddings or internal representation matrices to grow with the input length $n$ or the number of decoding steps $t$, or they assume exact unbounded-precision real or rational arithmetic. 
This is incompatible with the fixed finite precision of deployed models. 
Works that explicitly or implicitly adopt this assumption include~\cite{dehghani2018universal, perez2018on, bhattamishra2020computational, giannou2023looped, merrill2024the, li2024chain, roberts2024powerful, deluca2024simulation, nowak2024representational, qiu2025ask, hou2025universal, jiang2026softmax}.

Table~\ref{tab:assumptions} summarizes the assumptions used in related works.
Regarding the context-window length, all related works in the scaling-family setting assume a non-constant context window.
Regarding numerical precision, only two works assume constant precision.
These misalignments make the corresponding Turing-completeness claims more subtle and arguably debatable.
\begin{table}[ht]
    \centering
    \caption{Assumptions used in representative works. ``Window'' denotes the context-window length, i.e., the sequence length processed by a single self-attention operation. ``Precision'' denotes the number of bits used to represent one token. Here $n$ denotes the initial input length, $t$ denotes the number of decoding steps (often interpreted as the CoT length), and $s(n)$ denotes the space complexity to recognize the language. ``Unbounded'' precision means the proof assumes exact real / rational computation without a concrete precision bound.}
    \label{tab:assumptions}
    \begin{threeparttable}
        \begin{tabular}{cc>{\centering\arraybackslash}m{3.6cm}}
            \toprule
            \textbf{Window} & \textbf{Precision} & \textbf{Work} \\
            \midrule
            \multirow{8}{*}{$n + t$} & unbounded & \cite{dehghani2018universal}\tnote{*}, \cite{perez2018on, bhattamishra2020computational, roberts2024powerful, nowak2024representational, jiang2026softmax} \\
            \cmidrule{2-3}
             & $\poly(n)$ & \cite{li2024chain}\tnote{$\dagger$} \\
            \cmidrule{2-3}
             & $O(\log(n + t))$ & \cite{merrill2024the, li2024chain}\tnote{$\dagger$}, \cite{qiu2025ask}, \cite{hou2025universal}\tnote{*} \\
            \cmidrule{2-3}
             & $O(1)$ & \cite{malach2024auto}\tnote{*$\dagger$} \\
            \midrule
            \multirow{2}{*}{$n$} & unbounded & \cite{deluca2024simulation} \\
            \cmidrule{2-3}
             & $O(\log n)$ & \cite{giannou2023looped} \\
            \midrule
             $s(n)$ & $O(1)$ & \cite{li2025constant} \\
            \bottomrule
        \end{tabular}
        \begin{tablenotes}
            \item[*] \small{The original paper does not explicitly state an upper bound on the numerical precision, to the best of our understanding.}
            \item[$\dagger$] \small{The original paper does not explicitly claim that Turing-completeness has been proved; instead, it claims that one can construct a Transformer that simulates the execution of a Turing machine on a given input instance.}
        \end{tablenotes}
    \end{threeparttable}
    \vspace{-.6cm}
\end{table}

\subsection{Works with a Fixed Transformer}
\label{subsec:fixed}
Two works highlight that, even with fixed numerical precision and a fixed context window, computational universality can arise by modifying the overall system, especially the context-management interface.
\begin{enumerate}[leftmargin=*]
    \item Schuurmans~\cite{schuurmans2023memory} claims that a fixed Transformer-based LLM will be Turing-complete when augmented with access to external memory. 
    The paper provides experimental evidence that a real LLM can simulate all single-step operations of a Turing-complete language, but it does not give a constructive theoretical proof of how a Transformer realizes these operations.
    \item Schuurmans et al.~\cite{schuurmans2024autoregressive} propose an extended autoregressive decoding system with a fixed context window that can be Turing-complete by generating up to two output tokens per step and appending them to the token sequence for future processing.
\end{enumerate}
These works suggest that, in the fixed Transformer system regime, Turing-completeness may be achieved by modifying the decoding procedure or designing different context-management mechanisms, so scaling the Transformer itself is not necessary.
Section~\ref{sec:context-management-matters} further analyzes how different context-management methods affect the computational power of a fixed Transformer system.

\section{Our Position: Context Management Matters for Autoregressive Transformers}
\label{sec:context-management-matters}
In Section \ref{sec:alternative-views}, we point out that existing studies often conflate settings that align with real-world Transformers with settings that depart from them, while broadly claiming that Transformers are Turing-complete. 
Our position is different. 
We argue that, in the fixed Transformer system setting that better reflects real-world deployed Transformers, the method of context management has a greater impact on the system's computational power than the Transformer itself.

To support our view, we use simple theoretical derivations to analyze how different context-management methods affect the upper bound of the computational power of the entire system. 
Specifically, we examine two context-management methods, namely summarization-style context management and appending-style context management. 
We prove that, when combined with a fixed-context-window, fixed-precision Transformer, summarization-style context management leads to a system bounded by a constant-space Turing machine, whereas appending-style context management yields a system with the same power as a linear-space Turing machine.
In addition, drawing on existing works, we point out that the resulting systems are Turing-complete if the Transformer is allowed to read from and write to memory through a context-management mechanism, or if the Transformer is allowed to decode one or two tokens at a time.

Through the survey in Section \ref{subsec:scaling} and Section~\ref{subsec:fixed}, we show that most papers focus only on the importance of the Transformer itself while overlooking the importance of harnesses such as context management.
Therefore, our position paper highlights an issue that has been overlooked by the community and provides corrective guidance for subsequent research in this direction by emphasizing the importance of context management.

\subsection{Analysis of Different Context-Management Methods}
Next, we discuss how different context-management methods affect the computational power of Transformer systems with a fixed context-window size and fixed precision.
As discussed in Section~\ref{sec:fixed-system}, an unrestricted context manager $C$ can make the overall system trivially powerful.
Accordingly, we focus on ``simple enough'' context managers.
Such a manager maintains only $N$ memory cells for the Transformer's context window and $O(1)$ auxiliary memory cells for states, beyond storing token strings organized in simple data structures such as queues. 
It can apply only fixed local operations to these maintained strings, for example extracting a prefix by popping a constant number of tokens, shifting a window by a constant amount, appending tokens by pushing them to the tail, and querying the Transformer for the next token.
This convention captures the summarization-style and appending-style context managers described in Section~\ref{subsec:context-management-methods}, while excluding the ability to run general algorithms.
\vspace{-.3cm}
\paragraph{Summarization-Style Context Management.}
Next, we show that the system induced by summarization-style context management is upper bounded by a deterministic constant-space Turing machine, or $\compclass{FDSPACE}(1)$\footnote{Strictly speaking, space complexity classes are defined for machines that halt on all inputs. A machine using bounded space need not halt, so our use of space-complexity notation is a mild abuse adopted for notational convenience. The same caveat applies throughout.}.

\begin{proposition}
\label{prop:summarization-constant-space}
    Any fixed Transformer system $(T, D, C)$, where $C$ is a summarization-style context manager, can be simulated by a Turing machine using constant space.
\end{proposition}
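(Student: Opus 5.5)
Our plan is to show that the summarization-style system behaves like a finite-state transducer that reads its input left to right exactly once. A finite-state transducer can be run by a Turing machine with a read-only, left-to-right input tape, a write-only output tape, and a finite-state control that carries all the memory. That is the constant-space claim.

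\textbf{Step 1: structure of $r^{(t)}$.} At every step $r^{(t)}$ has the form $u^{(t)} \circ x_{p_t:n}$, where $x_{p_t:n}$ is an unread suffix of the original input and $u^{(t)}$ is a \emph{processed prefix} of length at most $N-1$. We prove this by induction on $t$.
\begin{itemize}
\item Initially $u^{(1)}=\epsilon$ and $p_1=1$.
\item In the normal phase, $|r^{(t)}|<N-1$, so the whole string already lies inside the processed prefix. Appending $\hat{x}_{t+1}$ keeps the length at most $N-1$.
\item In the summarization phase, $r^{(t)}_{1:N-1}$ is replaced by one token $\hat{x}_{t+1}$, and the rest is a suffix of $r^{(t)}$.
\end{itemize}
The one subtlety is bookkeeping. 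When $|r^{(t)}|\ge N-1$, we normalize so that the first $N-1$ symbols form the processed buffer. Any input symbols that are part of those first $N-1$ symbols are consumed into the buffer. Reading them requires only advancing the input head, never moving it back. Each update therefore moves the boundary between buffer and unread input only forward. This uses just the "simple enough" restriction stated above, namely popping a constant number of tokens from the head of a queue and pushing to the tail.

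\textbf{Step 2: finite configuration space.} Define the configuration of the system as the triple (buffer contents $u^{(t)} \in \Sigma^{\le N-1}$, auxiliary state of $C$, phase flag). Since $\Sigma$ is finite, $N$ is fixed, and $C$ has only $O(1)$ auxiliary cells over a finite alphabet, the set $Q$ of configurations is finite, of size at most $|\Sigma|^{N}\cdot O(1)$. The composition $D\circ T$ is a fixed function $\Sigma^N\to\Sigma$. This holds because $T$ is a fixed map on a finite domain (fixed precision, fixed weights) and $D$ is deterministic, for instance with a fixed seed and fixed-precision arithmetic. Each update rule depends only on the current configuration, the window contents, and, in the summarization phase, the next input symbols pulled into the buffer. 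Hence there is a finite transition function $\delta$: given $q\in Q$ and possibly the next input symbol, it returns the next configuration, whether the input head advances, and the output token, if any. Multi-token summaries with budget $t\le N/2$ are handled the same way, since the nested normal-decoding subroutine also runs inside the bounded buffer.

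\textbf{Step 3: the simulating machine.} The Turing machine $M$ stores $q\in Q$ in its finite control. It reads the input tape left to right, applies $\delta$, writes generated tokens to the output tape, and halts exactly when the system emits \texttt{<EOS>} or another stopping condition of $C$ fires. Its work-tape usage is zero, or $O(N)=O(1)$ if the buffer is kept on tape, independent of $n$.

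Correctness follows by induction on $t$: the configuration of $M$ after simulating step $t$ encodes $r^{(t)}$ exactly, as the buffer together with the input-head position. Nontermination of the system matches nontermination of $M$, consistent with the footnote on halting.

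The main obstacle is Step 1, specifically making rigorous that the model never needs to revisit input it has already absorbed and never holds more than $N-1$ unsummarized tokens. If the generated token were appended while the input had not yet been fully absorbed, tokens could interleave. We would first check that in the summarization phase new tokens are only prepended, as the definition says, so generated material always sits in the prefix. We would then verify that the prefix length is bounded by a case analysis on the two phases.
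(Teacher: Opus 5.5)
Your proposal is correct and takes essentially the same approach as the paper: a one-way transducer that keeps the invariant that a buffer of at most $N-1$ tokens, concatenated with the unread input suffix, equals $r^{(t)}$, refilling the buffer from the input before each step and branching on whether it reached $N-1$ tokens. The only difference is cosmetic: you hardwire the buffer and the finite map $D\circ T$ into the finite control, whereas the paper keeps the buffer in cells $1$ through $N$ of the work tape and simulates $D(T(\cdot))$ in a separate constant-size workspace region.
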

\vspace{-.5cm}
\begin{proof}
Assume $T$ has context-window length $N = O(1)$ tokens, with each input token in $\Sigma$.
An observation that will be used implicitly in what follows is that a single decoding step of a Transformer, $D(T(\cdot)) : \Sigma^N \to \Sigma$, can be simulated by a Turing machine in constant space (with the constant depending on $N$).

We construct a one-way transducer to simulate each step $t$ of the computation process of the system $(T, D, C)$, where the head of the read-only input tape never moves left (recall that a transducer can be formalized as a three-tape Turing machine with a read-only input tape, a read/write work tape, and a write-only output tape).

\textit{\underline{Initialization.}} At initialization, the input tape contains the input string $x = x_1\cdots x_n \in \Sigma^*$, the other two tapes are blank, and all heads start at the first cell.
We write a boundary marker $\#$ in cell $N + 1$ of the work tape to partition the tape into two parts: cells $1$ through $N$ simulate the contents of the Transformer's context window $w^{(t)}$, while cell $N + 2$ onward is the workspace region used to simulate a single Transformer decoding step $D(T(w^{(t)}))$.

\textit{\underline{Simulating step $t$ of the system.}} At the beginning of the simulation of the $t$-th step of a Transformer system with summarization-style context management, we copy (token by token) the symbol under the input head and append it to the end of the token string on the work tape, until either the number of tokens on the work tape reaches $N - 1$ or the input tape is exhausted.
We then distinguish two cases based on the number of tokens currently on the work tape:
\begin{itemize}[leftmargin = *]
    \item If the number of tokens on the work tape is less than $N - 1$ (denote the length by $\ell < N - 1$), we simulate one Transformer decoding step $D(T(\cdot))$ in the workspace region using these tokens as input.
    We then write the decoded token $\hat{x}_{t + 1}$ into cell $\ell + 1$, and finally clear the workspace region.
    \item If the number of tokens on the work tape is $N - 1$, we shift these tokens one cell to the right and write the summary token \texttt{<s>} in the first cell. 
    We then simulate one Transformer decoding step $D(T(\cdot))$ in the workspace using these length-$N$ token sequence, write the decoded token $\hat{x}_{t + 1}$ into the first cell of the work tape, and finally clear the work tape so that only $\hat{x}_{t + 1}$ in the first cell and the boundary marker \# in cell $N + 1$ remain.
\end{itemize}
The transducer then proceeds to simulate step $t + 1$ of the system.
It is easy to see that, during the simulation of any step $t$, the token sequence on the work tape (excluding the boundary marker $\#$), concatenated with the token sequence on the input tape at and to the right of the input head, is always equal to $r^{(t)}$.
The first case, in which placing all of $r^{(t)}$ into the context window yields a length no greater than $N - 1$, corresponds exactly to $|r^{(t)}| < N - 1$, while the second case corresponds to $|r^{(t)}| \ge N - 1$.
In both cases, the Turing machine simulates the Transformer system correctly.
Consequently, the above Turing machine correctly simulates the behavior of $(T, D, C)$.
For the space bound, the length of the work-tape portion of $r^{(t)}$ is at most $N$, and the simulation additionally requires constant space to simulate one Transformer decoding step.
Hence, the total space usage is constant.
\end{proof}
A standard fact of regular languages is that $\compclass{REG} = \compclass{DSPACE}(1) = \compclass{NSPACE}(1)$ (see, e.g.,~\cite{gasarch2015classifying, rothvoss2024computational}), where $\compclass{REG}$ denotes the class of regular languages, and $\compclass{NSPACE}(1)$ denotes constant space on a nondeterministic Turing machine.
Therefore, when $C$ is summarization-style and the Transformer's context-window length, depth, and numerical precision are fixed constants (i.e., do not scale with the input length $n$), the system $(T, D, C)$ can recognize only regular languages (assuming it halts), even if decoding is nondeterministic.
In particular, such a fixed system cannot recognize non-regular languages such as equality $\set{x\#x : x \in \Sigma^*}$, palindromes $\set{x\#x^{R} : x \in \Sigma^*}$, or binary addition $\set{\func{bin}(x)\#\func{bin}(y)\#\func{bin}(z) : x + y = z}$.

Another point worth noting is that in our construction, the workspace uses space $s$ that is at least of the same order as $N$ (with an additional contribution coming from the space needed to simulate the Transformer).
When translating this into an equivalent finite-state automaton, the resulting upper bound on the number of states grows exponentially in $s$, which in a sense provides an alternative perspective on scaling laws~\cite{kaplan2020scaling}.
Although this does not change the theoretical fact that the decidable languages remain regular, the languages the system can decide become more complex from the perspective of the number of states required by the corresponding finite-state automaton.

\paragraph{Appending-Style Context Management.}
Next, we discuss the computational power of appending-style context management.
\begin{proposition}
\label{prop:system-simulated-by-linear-TM}
    Any fixed Transformer system $(T, D, C)$, where $C$ is an appending-style context manager, can be simulated by a deterministic Turing machine using space linear in the input length $n$.
\end{proposition}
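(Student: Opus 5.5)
The plan is a direct simulation in which the work tape holds $r^{(t)}$ explicitly, with the key invariant that $|r^{(t)}|$ never exceeds $\max\set{n, 1}$. Indeed, the appending-style update $r^{(t+1)} = r^{(t)}_{2:|r^{(t)}|} \circ \hat{x}_{t+1}$ deletes one token from the front and appends one token at the back, so $|r^{(t+1)}| = |r^{(t)}|$ whenever $|r^{(t)}| \ge 1$. I will first check this by induction on $t$ from $r^{(1)} = x$. The only degenerate case is $n = 0$, where one step produces a length-$1$ string; after that the length stays fixed. Hence the entire state of the system, apart from the finite control of $C$, is a string of length $\max\set{n,1}$ over $\Sigma$.

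Next I will construct the simulating machine in the same style as the proof of Proposition~\ref{prop:summarization-constant-space}, using a three-tape transducer. On the work tape I copy $x$ into cells $1$ through $n$ and treat this region as a circular buffer. I keep a marker, or an extra track symbol, at the current start position of $r^{(t)}$, so that shifting the window costs no data movement. I place a boundary $\#$ after cell $n$, and beyond it a workspace of $O(N)$ cells.

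Each step proceeds as follows:
\begin{enumerate}[leftmargin=*]
\item Copy the first $\min\set{|r^{(t)}|, N}$ tokens, read cyclically from the start marker, into the workspace.
\item Simulate $D(T(\cdot))$ there in constant space, using the observation from the previous proof that one decoding step is computable in space depending only on $N$.
\item If $\hat{x}_{t+1} \in H$, write the output and halt.
\item Otherwise, overwrite the cell at the start marker with $\hat{x}_{t+1}$; this cell is exactly the position vacated by $r^{(t)}_1$, which becomes the new tail. Advance the marker cyclically by one, and clear the workspace.
\end{enumerate}
If one prefers not to use a circular buffer, an alternative is to physically shift the $n$ tokens left by one cell. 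This costs time but no extra space.

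Correctness follows from the invariant that the cyclic reading of the buffer from the marker equals $r^{(t)}$, which I will verify step by step against the definition of $C$. For the space bound, the work tape uses $n + 1 + O(N) + O(1)$ cells, which is $O(n)$ because $N$, $|\Sigma|$, and the precision of $T$ are constants. The output tape is write-only, so it is not counted.

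The main obstacle is bookkeeping rather than any conceptual difficulty. Three points need care: the edge case $n < N$, where the window is shorter than $N$ and padding is implicit; the $n = 0$ case; and the fact that the input must be copied to the work tape because it is overwritten, so the linear space is genuinely needed and cannot be avoided as in the one-way constant-space argument. I would also remark, as in the earlier footnote, that the machine need not halt, so membership in $\compclass{FDSPACE}(n)$ is meant in that relaxed sense.
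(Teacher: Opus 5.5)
Your proposal is correct and follows essentially the paper's construction: copy $x$ to the work tape, copy the first (at most) $N$ tokens into a constant-size workspace at each step, simulate $D(T(\cdot))$, and update the stored string in place. The only differences are that the paper physically shifts cells $2$ through $n+1$ left (the alternative you mention) instead of using a circular buffer, and that it uses the length invariant $|r^{(t)}| = n$ implicitly, whereas you verify it explicitly along with the edge cases $n < N$ and $n = 0$.
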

\begin{proof}
    The construction is straightforward. 
    At initialization, the Turing machine copies the entire current sequence to the work tape.
    At decoding step $t$, it copies the first $N$ tokens into a workspace region starting from cell $n+1$, simulates one Transformer decoding step, and obtains $\hat{x}_{t+1}$.
    It writes $\hat{x}_{t+1}$ into cell $n + 1$, shifts the content in cells $2$ through $n + 1$ one position to the left, and clears all other cells on the work tape.
\end{proof}
The proposition above shows that any fixed Transformer system with appending-style context management can be simulated by a linear-space Turing machine. 
We can also show the converse direction.
Before stating the proposition and its proof, we briefly review the extended autoregressive decoding system defined by Schuurmans et al.~\cite{schuurmans2024autoregressive}.
Their $N$-gram extended autoregressive decoding is based on a sliding window. Initially, the current token string is set to the input $x = x_1\cdots x_n$.
At each decoding step, the first $N$ tokens are fed into a fixed decoding function $M : \Sigma^N \to \Sigma^*$, which outputs a decoded token string (and the process halts if the output falls in a designated halting set).
The decoded string is appended to the end of the current sequence, and the first token is removed.
When the output length of $M$ is at most $K$, the resulting system is called an $(N, K)$-restricted system.
We also need the following lemma (Lemma~\ref{lem:binary-function}), whose proof is deferred to Appendix~\ref{app:proof-binary-function}.
\begin{restatable}{lemma}{BinaryFunction}
\label{lem:binary-function}
    Let $\Sigma$ be a finite alphabet.
    For any function $f: \Sigma^2 \to \Sigma$, there exists a decoder-only Transformer $T$ such that for every length-$2$ context $(a, b)\in \Sigma^2$, greedy decoding outputs $f(a, b)$.
\end{restatable}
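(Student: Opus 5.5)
The plan is an explicit construction: a one-layer decoder-only Transformer whose attention, at the last position, recovers both tokens of the context $(a,b)$ in its residual stream, and whose feed-forward block then acts as a lookup table for $f$. Because $\Sigma$ is finite, $\Sigma^2$ is finite, so a finite lookup table suffices and no precision or length scaling is needed.

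\emph{Embedding.} I will use the positional-encoding convention from the preliminaries and work in dimension $d = 2|\Sigma| + 1$. The embedding of token $a$ at position $1$ is $(e_a, 0, 1)$, and the embedding of token $b$ at position $2$ is $(0, e_b, 1)$. Here $e_a$ is a one-hot vector, the first block is reserved for position $1$, the second block for position $2$, and the last coordinate is a constant bias.

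\emph{Attention.} One head suffices. Take the query and key maps to be zero, so the attention scores are all equal and causal softmax at position $2$ assigns weight $1/2$ to each of the two positions. Take the value map to be the identity on the first $2|\Sigma|$ coordinates and zero on the bias coordinate. The head then outputs $\tfrac12(e_a, e_b)$. Adding this to the residual stream at position $2$, which is $(0,e_b,1)$, gives
\[
\left(\tfrac12 e_a,\ \tfrac32 e_b,\ 1\right),
\]
which determines the pair $(a,b)$ uniquely. The weights $1/2$ and $3/2$ are exactly representable in any reasonable finite-precision format, so finite precision causes no difficulty.

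\emph{Feed-forward lookup.} For each pair $(a',b')\in\Sigma^2$, introduce one hidden ReLU unit
\[
h_{a',b'} = \mathrm{ReLU}\!\left(2 x_{a'} + \tfrac23 y_{b'} - 1\right),
\]
where $x$ and $y$ denote the two blocks of the residual stream. On the stream $(\tfrac12 e_a, \tfrac32 e_b, 1)$ the pre-activation equals $[a=a'] + [b=b'] - 1$. This is $1$ exactly when $(a',b')=(a,b)$ and at most $0$ otherwise, so the hidden layer is the one-hot indicator of the pair $(a,b)$. The output matrix sends $h_{a',b'}$ to the unembedding direction of $f(a',b')$, scaled by a constant margin. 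The resulting logits are then that margin on $f(a,b)$ and $0$ on every other token. Greedy decoding takes the $\arg\max$, which is unique, and returns $f(a,b)$.

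\emph{Remaining details and main obstacle.} Layer normalization, if present, can be neutralized: either omit it as in the abstract model of Appendix~\ref{app:transformers}, or note that it rescales each vector uniformly and so preserves the sign pattern used above (the margin can be adjusted accordingly). The step I expect to need the most care is the attention step, namely checking that it genuinely recovers the token at position $1$ in a way that is distinguishable from the token at position $2$. The mixed $\tfrac12$ versus $\tfrac32$ weights handle this, including the case $a=b$, because the two blocks $x$ and $y$ are kept separate. If the paper's architecture forbids position-dependent embeddings, I would instead use a causal attention head whose query--key scores favour position $1$, making its weight close to $1$. This yields approximate values, and a finite margin argument then shows that the thresholds in the feed-forward layer remain correct.
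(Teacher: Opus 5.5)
Your construction is correct up to the small fixes below. It has the same skeleton as the paper's proof: one layer, attention that makes both tokens available at position $2$, hidden ReLU units with threshold $1$ forming a one-hot indicator of the pair, and a lookup table from that indicator to the logits of $f$.

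The difference lies in the step you flagged as the crux. You separate the two tokens \emph{before} attention with a position-dependent embedding, so a single uniform head suffices and its weights $\tfrac12$ are exact. The paper uses no positional encoding at all. Its ``prev'' head has query $\sqrt{K}(\bs{1}-\bs{e}_b)$ and up-weights keys whose token differs from the current one; its ``self'' head up-weights the current token. The two head outputs are distinct convex combinations of $\bs{e}_a,\bs{e}_b$, and the FFN's first affine map inverts them exactly; the identities also hold when $a=b$.

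Your route is simpler and behaves better in finite precision than the paper's irrational coefficients such as $e/(e-1)$. The paper's route, in exchange, needs no positional information. Your embedding, $(\bs{e}_a,\bs{0},1)$ at position $1$ versus $(\bs{0},\bs{e}_b,1)$ at position $2$, is a lookup on $\Sigma\times\mathcal{P}$. The abstraction in Appendix~\ref{app:transformers} permits it, but it is not of the usual additive token-plus-position form, since that would force $P_1-P_2=(\bs{e}_a,-\bs{e}_a,0)$ for every $a$.

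Your fallback does not work as stated. Without positional information a head cannot favour position $1$ as such: when $a=b$ the two keys coincide. It can only favour tokens that differ from the query token, which is exactly the paper's prev head. Even then, $\bs{e}_a$ is recovered exactly by a linear map, not approximately.

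In fact you need neither positional encodings nor a second head. Keep zero query and key, and let $\bs{W}_O$ write the head output $\tfrac12(\bs{e}_a+\bs{e}_b)$ into a fresh block while the residual keeps $\bs{e}_b$ in the token block. Then $\bs{e}_a=2\cdot\tfrac12(\bs{e}_a+\bs{e}_b)-\bs{e}_b$ is a linear function of the stream, including when $a=b$. Your FFN lookup then goes through after precomposing with this map.

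Finally, three small points about your main construction.
\begin{enumerate}
\item $\tfrac23$ is not a dyadic weight, so your exact-representability claim does not cover it. $\mathrm{ReLU}(2x_{a'}+y_{b'}-\tfrac32)$ gives the same indicator with dyadic weights.
\item The residual $(\tfrac12\bs{e}_a,\tfrac32\bs{e}_b,1)$ also reaches the classification head. Either reserve a dedicated output block, which is the role of the paper's $K^2$-dimensional pair block, or take the margin large. In the latter case the off-target logits are not literally $0$, but the argmax is still unique.
\item LayerNorm also mean-centres, so it does not merely rescale. Simply omit it, as the paper does.
\end{enumerate}
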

We are now ready to state and prove the converse direction.
\begin{proposition}
\label{prop:linear-TM-simulated-by-system}
    Any deterministic Turing machine that uses linear space can be simulated by a fixed Transformer system $(T, D, C)$, where $T$ has context-window length $2$ and $C$ is an appending-style context manager.
\end{proposition}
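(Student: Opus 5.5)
Our plan is to simulate a linear-space TM $M$ by a rotating ``conveyor belt'' of tokens that exactly fits the appending-style mechanism with window $2$. Since the appending manager removes one token and appends exactly one per step, the length of $r^{(t)}$ never changes. We therefore encode the TM tape (of length $cn$, after a standard padding/compression of linear space to space exactly $n$ by enlarging the tape alphabet to blocks of $c$ symbols) cyclically in $r^{(t)}$. One step of the system reads the two leftmost tokens $(a,b)$ and emits $f(a,b)$, which becomes the new last token. Applying Lemma~\ref{lem:binary-function} will realize any such $f:\Sigma^2\to\Sigma$ by a decoder-only Transformer $T$ with greedy decoding $D$. Thus the whole task reduces to designing a finite alphabet $\Sigma$ and a binary local rule $f$ so that one full rotation of the string (or a constant number of rotations) performs one TM step. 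We also need a designated halting set $H$ emitted when $M$ halts.

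We will take $\Sigma$ to be tape symbols, possibly annotated with the TM state (marking the head position), plus an end-of-round marker $\$$. One issue is that the input is given as $x_1\cdots x_n$ without state annotation. We plan to handle this by letting the first pass of $f$ treat the first cell specially: for instance, the first token has no left neighbor issue because $f$ only sees $(r_1,r_2)$, so we will include a preprocessing convention. Either we assume the input is presented with a start marker in the first cell, or $f$ detects the first round by the absence of any state-annotated symbol, using a ``not-yet-initialized'' flag carried in the emitted tokens.

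The key local rule works as follows. When the window is $(a,b)$, the emitted token is the new version of cell $a$. This is a function of $a$ and its right neighbor $b$, which suffices for head moves to the left: if $b$ carries state $q$ reading symbol $s$ and $\delta(q,s)$ moves left, then $a$ becomes annotated with the new state. Moves to the right need information from the left neighbor, which $f$ cannot see. To fix this, we let each emitted token carry one extra bit or field of ``pending'' information passed rightward. Because the emitted token for cell $a$ is placed behind the emitted token for the previous cell, the previous output is not in the window either. We will therefore instead encode the cell content together with a copy of the left neighbor's relevant data. Concretely, the emitted token for cell $i$ is computed from old cell $i$ (which already stores a copy of old cell $i-1$'s state info) and old cell $i+1$, yielding a three-cell neighborhood view, and it stores updated info for the next round.

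We expect this step to be the main obstacle: making a two-token window implement a genuine three-cell TM update while keeping the cyclic boundary ($\$$ at the wrap) consistent and avoiding double-applying moves within one rotation. If the stored-neighbor trick is awkward, the fallback is to use two rotations per TM step. The first rotation marks the target cell (left-moving sweeps are easy), and for right moves we rely on the cyclic wrap, moving around the whole ring, which is still fine for space. Finally, we will verify by induction on rounds that $r^{(t)}$ at round boundaries encodes the TM configuration, and that halting states map to tokens in $H$.
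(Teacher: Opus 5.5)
\textbf{There is a genuine gap.} Your reduction matches the paper's: with window length $2$ and greedy decoding, Lemma~\ref{lem:binary-function} reduces everything to finding a rule $f:\Sigma^2\to\Sigma$ for the appending dynamics. The paper does not build $f$. It notes the system is a $(2,1)$-restricted system and takes the existence of $f$ from \cite{schuurmans2024autoregressive}. You try to construct $f$ directly, and the construction fails exactly at the step you flag.

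If $r^{(t)}=c_1\cdots c_n$, then $n$ steps later the string is $c_1'\cdots c_n'$ with $c_i'=f(c_i,c_{i+1})$ for $i<n$ and $c_n'=f(c_n,c_1')$. So every token is written while only it and its right neighbour are visible. Information moves only leftward around the ring, by at most two positions per rotation. Consequences:
\begin{itemize}
\item The stored-neighbour trick has no mechanism. Position $i$ is never written while position $i-1$ is visible, and information from $i-1$ reaches $i$ only after going around the whole ring. So ``old cell $i$ already stores a copy of old cell $i-1$'s state'' cannot be arranged.
\item The fallback does not repair this. With tape cells pinned to token positions, a right move needs a signal to travel $n-1$ positions, which takes $\Theta(n)$ rotations, not two.
\item Worse, the signal cannot recognise its destination. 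The window that writes position $j+1$ is $(c_{j+1},c_{j+2})$, which never contains the origin $j$. The origin sees the signal only afterwards, when it can no longer inform $j+1$.
\item A separate \$ token is unavailable, because $r^{(1)}=x$ and the length stays exactly $n$.
\item In the first pass only the last cell can detect its position (its right neighbour is already processed), not the first.
\end{itemize}

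\textbf{The missing idea is to let the encoding drift} instead of pinning cells to tokens. Alternate two kinds of rotation, told apart by a phase bit carried in every token.
\begin{itemize}
\item In a plain rotation, token $i$ keeps its content $d_i$ and records a copy of $d_{i+1}$.
\item In a shifting rotation, the window at $i$ shows $(d_i,d_{i+1})$ and $(d_{i+1},d_{i+2})$. The emitted token can therefore be $h(d_i,d_{i+1},d_{i+2})$, the updated content of cell $i+1$ moved one position left.
\end{itemize}
Two rotations then simulate one step of an arbitrary two-sided radius-$1$ cellular automaton on the ring, hence of an LBA whose head is a marked cell. The boundary flag comes from the first-pass detection of the last cell. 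The wrap, where $c_n'$ sees the already-updated $c_1'$, is handled by detecting the phase mismatch and having each emitted token also record its position's previous content.

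For the space reduction, use multi-track folding (token $i$ holds cells $i,i+n,i+2n,\dots$) or the deterministic-LBA normal form, not consecutive blocks, which would require repacking the input.

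Completed this way, your route is self-contained and works with the length-preserving appending manager exactly as defined. The paper's proof instead relies on the cited $(2,1)$ result plus its footnote convention of ignoring an empty symbol.
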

\begin{proof}
    By the definition of an $(N, K)$-restricted system, our fixed Transformer system with an appending-style context manager is a special case with $K = 1$, where the decoding function $M$ is implemented by the composition of a Transformer $T$ and a decoding rule $D$\footnote{In the definition of \cite{schuurmans2024autoregressive}, when $K = 1$ the decoding output of $M$ can contain zero or one token. In our appending-style context management, this can be modeled by including an empty symbol in $\Sigma$ and ignoring it when appending.}.
    Schuurmans et al.~\cite{schuurmans2024autoregressive} show that the computation of any linear-space Turing machine can be simulated by a $(2, 1)$-restricted system. 
    According to Lemma~\ref{lem:binary-function}, any function $f: \Sigma^2 \to \Sigma$ can be realized by a Transformer $T$ with context-window length $2$ coupled with the greedy decoding rule, and the claim follows.
\end{proof}
Propositions~\ref{prop:system-simulated-by-linear-TM} and~\ref{prop:linear-TM-simulated-by-system} suggest that a single fixed Transformer system with appending-style context management has the same computational power as a linear-space Turing machine.
In particular, it can decide exactly the class of deterministic context-sensitive languages, $\compclass{DCSL} = \compclass{DSPACE}(n)$ (see, e.g., \cite{ibarra1991resetting, otto2006restarting}).
Schuurmans et al.~\cite{schuurmans2024autoregressive} also show that a $(2, 2)$-restricted system is Turing-complete.
If one could extend Lemma~\ref{lem:binary-function} to show that a context-window-$2$ Transformer can realize any function $\Sigma^2 \to \Sigma^2$ by applying greedy decoding to the two output positions $\bs{Y}(1, :)$ and $\bs{Y}(2, :)$, then this would yield a Turing-complete system as well.
Note that this changes the decoding interface from next-token decoding to next-two-tokens decoding.
In Appendix~\ref{app:next-two-tokens-decoding}, we briefly discuss how one might preserve next-token decoding by modifying the context manager to perform two decoding steps and then discard one of the produced tokens.

Table~\ref{tab:context-management} summarizes the computational power of a fixed Transformer system under different context managers.
We conclude that different context managers can endow the same underlying model with markedly different computational power, which supports our position: \textbf{the Turing-completeness of real-world autoregressive Transformers relies heavily on context management}.
\begin{table}[ht]
    \centering
    \caption{Computational power under different context managers. Here $\equiv$ denotes equivalence and $\le$ denotes no more powerful than.}
    \label{tab:context-management}
    \begin{threeparttable}
        \begin{tabular}{>{\centering\arraybackslash}m{1.9cm}c>{\centering\arraybackslash}m{1.8cm}}
            \toprule
            \textbf{Work} & \textbf{Context Management} & \textbf{Power} \\
            \midrule
            \cite{schuurmans2023memory} & read / write memory & $\equiv$ TM \\
            \cite{schuurmans2024autoregressive} & $(2, 2)$-restricted sys. & $\equiv$ TM \\
            \cite{schuurmans2024autoregressive} & $(2, 1)$-restricted sys. & $\equiv O(n)$-space TM \\
            Ours & appending-style & $\equiv O(n)$-space TM \\
            Ours & summarization-style & $\le O(1)$-space TM \\
            \bottomrule
        \end{tabular}
    \end{threeparttable}
    \vspace{-.5cm}
\end{table}

\section{Summary and Call to Action}
\label{sec:summary-call-to-action}
In this paper, we formalized the computation of a fixed Transformer system and distinguished two settings for the Turing-completeness of autoregressive Transformers.
We showed that the scaling-family setting is theoretically misaligned with the notion of Turing-completeness and, in practice, does not match how real-world LLMs are deployed and used.
We then analyzed the fixed-system setting and demonstrated that context management can substantially affect the computational power of Transformers.

We conclude with three calls to action.
\begin{itemize}[leftmargin = *]
    \item First, claims about Transformer Turing-completeness should \textbf{explicitly state their computational setting and assumptions}. 
    Scaling-family results are valuable for understanding resource requirements, such as context length, precision, and depth, but should not be interpreted as Turing-completeness of a fixed real-world Transformer system.

    \item Second, since real-world deployed LLM systems consist of a fixed-context-window, fixed-precision Transformer together with a particular context-management method, theoretical work should \textbf{devote more attention to the capabilities of the overall system obtained by combining different context-management methods with a fixed Transformer}.

    \item Third, theoretical works should \textbf{look beyond qualitative Turing-completeness analyses under idealized assumptions, and complement them with capability claims stated in terms of explicit resource budgets and learnability criteria}, since Turing-completeness only concerns whether a function is computable under a specified encoding, and does not by itself answer whether the model can acquire, generalize, or robustly use the relevant solution.
\end{itemize}

% Acknowledgements should only appear in the accepted version.
\section*{Acknowledgments}
This research was supported in part by National Natural Science Foundation of China (No. L2524018, No. U2241212, No. 92470128, No. 62472430) and by Industrial AI Solutions, Li Auto Inc. We also wish to acknowledge the support provided by the fund for building world-class universities (disciplines) of Renmin University of China, by Engineering Research Center of Next-Generation Intelligent Search and Recommendation, Ministry of Education, by Intelligent Social Governance Interdisciplinary Platform, Major Innovation \& Planning Interdisciplinary Platform for the ``Double-First Class'' Initiative, Public Policy and Decision-making Research Lab, and Public Computing Cloud, Renmin University of China. 

The work was partially done at Gaoling School of Artificial Intelligence, Beijing Key Laboratory of Research on Large Models and Intelligent Governance, Engineering Research Center of Next-Generation Intelligent Search and Recommendation, MOE, and Pazhou Laboratory (Huangpu), Guangzhou, Guangdong 510555, China.

We would also like to thank Hessameddin Akhlaghpour for the blog post \textit{Are Transformers Turing-Complete? A Good Disguise Is All You Need}, which inspired us to further examine and carefully distinguish the assumptions in existing theoretical works that align with real-world practice from those that do not.
It also motivated us to investigate, under assumptions that better reflect real-world Transformers, how the capabilities of these models are closely tied to context management.
We also thank the anonymous reviewers for their valuable comments.

% In the unusual situation where you want a paper to appear in the
% references without citing it in the main text, use \nocite
% \nocite{langley00}

\bibliography{main}
\bibliographystyle{icml2026}

%%%%%%%%%%%%%%%%%%%%%%%%%%%%%%%%%%%%%%%%%%%%%%%%%%%%%%%%%%%%%%%%%%%%%%%%%%%%%%%
%%%%%%%%%%%%%%%%%%%%%%%%%%%%%%%%%%%%%%%%%%%%%%%%%%%%%%%%%%%%%%%%%%%%%%%%%%%%%%%
% APPENDIX
%%%%%%%%%%%%%%%%%%%%%%%%%%%%%%%%%%%%%%%%%%%%%%%%%%%%%%%%%%%%%%%%%%%%%%%%%%%%%%%
%%%%%%%%%%%%%%%%%%%%%%%%%%%%%%%%%%%%%%%%%%%%%%%%%%%%%%%%%%%%%%%%%%%%%%%%%%%%%%%
\newpage
\appendix
\onecolumn
\appendixpage

\section{More Details of Transformers}
\label{app:transformers}
We first introduce our notation for vectors and matrices. 
Bold lowercase letters such as $\bs{x}$ denote vectors (column vectors unless stated otherwise), and bold uppercase letters such as $\bs{X}$ denote matrices. 
We write $\bs{x}(i)$ for the $i$-th entry of $\bs{x}$. 
For a matrix $\bs{X}$, $\bs{X}(i, :)$ denotes the $i$-th row (as a row vector), $\bs{X}(:, j)$ denotes the $j$-th column (as a column vector), and $\bs{X}(i, j)$ denotes the $(i, j)$-th entry.

Below, we describe in detail a single forward pass of a Transformer:
\begin{itemize}[leftmargin=*]
    \item \textbf{Embedding:} 
    A Transformer first maps each token ID to a token embedding vector of dimension $d$. 
    This is done by an embedding layer that applies a pointwise lookup-table function $g: [|\Sigma|] \to \mathbb{R}^d$ to each token ID.
    In some architectures, positional information is incorporated at the embedding stage by adding a positional encoding (PE) to the embedding vectors, such as the sinusoidal PE in the vanilla Transformer~\cite{vaswani2017attention} or the learned PE in BERT~\cite{devlin2019bert}, to make the model aware of token positions. 
    In other cases, positional information is introduced within self-attention (described later), for example via RoPE~\cite{su2024roformer}, which is used by many LLMs.
    In summary, the embedding layer can be formulated as a mapping $\emb: [|\Sigma|]^N \times \mathbb{N}^{N} \to \mathbb{R}^{N \times d}$, which maps at most $N$ token IDs (where $N$ is the context window length, i.e., the maximum input length accepted by the Transformer) together with their position indices to an embedding matrix.
    
    \item \textbf{Multi-head masked self-attention decoding:}
    The self-attention mechanism is one of the most fundamental components of Transformers.
    First, in the $\ell$-th self-attention decoding layer, we apply linear maps to obtain the query, key, and value matrices $\bs{Q}^{(\ell)}, \bs{K}^{(\ell)}, \bs{V}^{(\ell)} \in \mathbb{R}^{N \times d}$, namely $\bs{Q}^{(\ell)} = \bs{X}^{(\ell)}\bs{W}^{(\ell)}_{Q}$, $\bs{K}^{(\ell)} = \bs{X}^{(\ell)}\bs{W}^{(\ell)}_{K}$, and $\bs{V}^{(\ell)} = \bs{X}^{(\ell)}\bs{W}^{(\ell)}_{V}$, where $\bs{X}^{(\ell)} \in \mathbb{R}^{N \times d}$ is the input embedding matrix and $\bs{W}^{(\ell)}_{Q}, \bs{W}^{(\ell)}_{K}, \bs{W}^{(\ell)}_{V} \in \mathbb{R}^{d \times d}$ are weight matrices.
    For multi-head masked self-attention, we split each matrix along the embedding dimension into $H$ submatrices $\set{\bs{Q}^{(\ell)}_h, \bs{K}^{(\ell)}_h, \bs{V}^{(\ell)}_h \in \mathbb{R}^{N \times d_h}: h \in [H]}$ such that $\sum_{h \in [H]} d_h = d$, and assign them to $H$ attention heads.
    On each head, the masked self-attention output is computed as
    \begin{equation}
        \tilde{\bs{X}}^{(\ell)}_{h} = \softmax{\frac{\bs{Q}^{(\ell)}_h(\bs{K}^{(\ell)}_h)^\top}{\sqrt{d_h}} + \bs{M}}\bs{V}^{(\ell)}_h,
    \end{equation}
    where $\bs{M} \in \set{0, -\infty}^{N \times N}$ is the causal mask, defined by $\bs{M}(i, j) = -\infty$ if and only if $i < j$.
    We then concatenate $\set{\tilde{\bs{X}}^{(\ell)}_h: h \in [H]}$ to recover the original shape, add the result to the input $\bs{X}^{(\ell)}$ via a residual connection, and feed it into an MLP (also with a residual connection) to obtain the layer output $\bs{X}^{(\ell + 1)}$.

    \item \textbf{Classification head:}
    After the final decoding layer, we map the output matrix $\bs{X}^{(L)} \in \mathbb{R}^{N \times d}$ through a linear projection $\bs{W} \in \mathbb{R}^{d \times |\Sigma|}$ to obtain the token logits at each position, i.e., $\bs{Y} = \bs{X}^{(L)}\bs{W} \in \mathbb{R}^{N \times |\Sigma|}$.
\end{itemize}

\section{Proof of Lemma~\ref{lem:binary-function}}
\label{app:proof-binary-function}
\BinaryFunction*
\begin{proof}
Without loss of generality, assume $\Sigma = \set{1, 2, \cdots, K}$.
Let $d_{\text{model}} = K + 2K + K^2$, and partition each token embedding (row) vector $\bs{x}\in\mathbb{R}^{1 \times d_{\text{model}}}$\footnote{All arithmetic operations in this proof can be carried out to constant precision (with constants depending only on $|\Sigma|$); we use $\mathbb{R}$ only for notational convenience.}into three blocks,
$\bs{x} \coloneqq \left(\bs{x}^{\text{tok}}; \bs{x}^{\text{feat}}; \bs{x}^{\text{pair}}\right)$,
with dimensions $K$, $2K$, and $K^2$, respectively.

We construct a one-layer decoder-only Transformer that consists of (i) multi-head self-attention with two heads (denoted $\mathrm{prev}$ and $\mathrm{self}$), each with $d_k = K$, followed by a residual connection, and (ii) a position-wise feed-forward network (FFN), followed by a residual connection.
We do not use positional encodings, LayerNorm, or dropout.
The construction is summarized as follows:
\begin{itemize}[leftmargin=*]
    \item \textbf{Embedding.} For a token $a$, set its embedding to $(\bs{e}_a; \bs{0}; \bs{0})$, where $\bs{e}_a \in \mathbb{R}^{1\times K}$ is the one-hot vector corresponding to $a$.
    \item \textbf{Self-attention.} After applying self-attention to the two input tokens, the representation at position $2$ has the form $(*; (\bs{o}_2^{(\mathrm{prev})}, \bs{o}_2^{(\mathrm{self})}); \bs{0})$.
    \item \textbf{FFN.} Using the affine map $\bs{X}\bs{W}_1 + \bs{1}\bs{b}_1^\top$, we first recover $(*; (\bs{e}_a, \bs{e}_b); \bs{0})$ from $(*; (\bs{o}^{(\mathrm{prev})}, \bs{o}^{(\mathrm{self})}); \bs{0})$, and then apply a second affine map to produce $(*; *; \bs{e}_{(a - 1)K + b})$.
    \item \textbf{Output head.} Finally, we map $\bs{e}_{(a - 1)K + b}$ in the pair block to logits using the classification head.
\end{itemize}
The details are as follows:
\paragraph{Step 1: Embedding.}
Given input tokens $(a, b) \in \Sigma^2$, define the two token representations
$$
\bs{X} \coloneqq
\begin{bmatrix}
    \bs{x}_1 \\
    \bs{x}_2
\end{bmatrix}
=
\begin{bmatrix}
    \bs{e}_a & \bs{0} & \bs{0} \\
    \bs{e}_b & \bs{0} & \bs{0}
\end{bmatrix}
\in \mathbb{R}^{2 \times d_{\text{model}}},
$$
where $\bs{x}_1 = (\bs{e}_a; \bs{0}; \bs{0})$ and $\bs{x}_2 = (\bs{e}_b; \bs{0}; \bs{0})$.

\paragraph{Step 2: Two-Head Self-Attention.}
Let the query matrices be
$$
\bs{W}_Q^{(\mathrm{prev})}\coloneqq
\begin{bmatrix}
    \sqrt{K}(\bs{J} - \bs{I}_K) \\
    \bs{0}_{2K \times K} \\
    \bs{0}_{K^2 \times K}
\end{bmatrix},
\qquad
\bs{W}_Q^{(\mathrm{self})}\coloneqq
\begin{bmatrix}
    \sqrt{K}\bs{I}_K\\
    \bs{0}_{2K \times K}\\
    \bs{0}_{K^2 \times K}
\end{bmatrix},
$$
and for each head $h\in \set{\mathrm{prev},\mathrm{self}}$ define
$$
\bs{W}_K^{(h)}=\bs{W}_V^{(h)}\coloneqq
\begin{bmatrix}
    \bs{I}_K \\
    \bs{0}_{2K \times K} \\
    \bs{0}_{K^2 \times K}
\end{bmatrix}
\in\mathbb{R}^{d_{\text{model}}\times K}.
$$
For head $h$, the query at position $2$ is
$$
\bs{q}^{(h)}_2 \coloneqq \bs{x}_2\bs{W}_Q^{(h)} =
\begin{cases}
    \sqrt{K}(\bs{1} - \bs{e}_b), & h = \mathrm{prev}, \\
    \sqrt{K}\bs{e}_b, & h = \mathrm{self},
\end{cases}
$$
and for $t\in \set{1,2}$ the keys and values satisfy
$$
\bs{k}^{(h)}_t \coloneqq \bs{x}_t\bs{W}_K^{(h)} =
\bs{v}^{(h)}_t \coloneqq \bs{x}_t\bs{W}_V^{(h)} =
\begin{cases}
    \bs{e}_a, & t = 1, \\
    \bs{e}_b, & t = 2.
\end{cases}
$$
Define the attention weights
\begin{align*}
    \alpha^{(h)}_{2,t}
    \coloneqq
    \frac{\exp\left(\langle \bs{q}^{(h)}_2, \bs{k}^{(h)}_t\rangle/\sqrt{K}\right)}
    {\sum_{u\in \set{1,2}}\exp\left(\langle \bs{q}^{(h)}_2, \bs{k}^{(h)}_u\rangle/\sqrt{K}\right)}
    &=
    \frac{\exp\left(\langle \bs{q}^{(h)}_2, \bs{k}^{(h)}_t\rangle/\sqrt{K}\right)}
    {\exp\left(\langle \bs{q}^{(h)}_2, \bs{e}_a\rangle/\sqrt{K}\right) + \exp\left(\langle \bs{q}^{(h)}_2, \bs{e}_b\rangle/\sqrt{K}\right)} \\
    &=
    \begin{cases}
        \dfrac{\exp\left(\langle \bs{1} - \bs{e}_b, \bs{k}^{(h)}_t\rangle\right)}
        {\exp\left(\mathbb{I}[a \neq b]\right) + 1}, & h = \mathrm{prev}, \\[6pt]
        \dfrac{\exp\left(\langle \bs{e}_b, \bs{k}^{(h)}_t\rangle\right)}
        {\exp\left(\mathbb{I}[a = b]\right) + e}, & h = \mathrm{self}.
    \end{cases}
\end{align*}
We next compute the head outputs at position $2$, denoted $\bs{o}_2^{(h)}\in \mathbb{R}^{1\times K}$.
Since
$$
\bs{o}_2^{(h)} \coloneqq \sum_{t \in \set{1,2}} \alpha^{(h)}_{2,t}\bs{v}^{(h)}_t
= \alpha^{(h)}_{2, 1}\bs{e}_a + \alpha^{(h)}_{2,2}\bs{e}_b,
$$
we obtain
$$
\bs{o}_2^{(\mathrm{prev})} =
\begin{cases}
    \frac{e}{e + 1}\bs{e}_a + \frac{1}{e+1}\bs{e}_b, & a \neq b, \\
    \frac{1}{2}\bs{e}_a + \frac{1}{2}\bs{e}_b, & a = b,
\end{cases}
\qquad
\bs{o}_2^{(\mathrm{self})} =
\begin{cases}
    \frac{1}{e+1}\bs{e}_a + \frac{e}{e+1}\bs{e}_b, & a \neq b, \\
    \frac{1}{2}\bs{e}_a + \frac{1}{2}\bs{e}_b, & a = b.
\end{cases}
$$

Let the concatenated head outputs be
$$
\func{Attn}(\bs{X}) \coloneqq
\begin{bmatrix}
    \bs{o}_1^{(\mathrm{prev})} & \bs{o}_1^{(\mathrm{self})} \\
    \bs{o}_2^{(\mathrm{prev})} & \bs{o}_2^{(\mathrm{self})}
\end{bmatrix}
\in \mathbb{R}^{2 \times 2K}.
$$
Choose the attention output projection
$
\bs{W}_O \coloneqq
\begin{bmatrix}
    \bs{0}_{2K \times K} & \bs{I}_{2K} & \bs{0}_{2K \times K^2}
\end{bmatrix}
\in \mathbb{R}^{2K \times d_{\text{model}}}
$ so that $\func{Attn}(\bs{X})\bs{W}_O$ writes the two $K$-dimensional head outputs into the middle feature block (and zeros elsewhere).
Let $\bs{Z} \coloneqq \bs{X} + \func{Attn}(\bs{X})\bs{W}_O$ be the post-attention representation matrix.
Then
$$
\bs{Z}(2, :) = \left(\bs{e}_b; \left(\bs{o}_2^{(\mathrm{prev})}; \bs{o}_2^{(\mathrm{self})}\right); \bs{0}\right).
$$

\paragraph{Step 3: The FFN Builds a Pair Indicator.}
A key observation is that
$\bs{e}_a = \frac{e}{e - 1}\bs{o}_2^{(\mathrm{prev})} - \frac{1}{e - 1}\bs{o}_2^{(\mathrm{self})}$
and
$\bs{e}_b = \frac{e}{e - 1}\bs{o}_2^{(\mathrm{self})} - \frac{1}{e - 1}\bs{o}_2^{(\mathrm{prev})}$.
These identities follow by inverting the corresponding linear system when $a \neq b$, and one can verify that they also hold for $a = b$.

Let
$$
\bs{W}_A \coloneqq
\begin{bmatrix}
    \bs{0}_{K \times K} & \bs{0}_{K \times K} & \bs{0}_{K \times K} & \bs{0}_{K \times K^2} \\
    \bs{0}_{K \times K} & \frac{e}{e - 1}\bs{I}_{K \times K} & -\frac{1}{e - 1}\bs{I}_{K \times K} & \bs{0}_{K \times K^2} \\
    \bs{0}_{K \times K} & -\frac{1}{e - 1}\bs{I}_{K \times K} & \frac{e}{e - 1}\bs{I}_{K \times K} & \bs{0}_{K \times K^2} \\
    \bs{0}_{K^2 \times K} & \bs{0}_{K^2 \times K} & \bs{0}_{K^2 \times K} & \bs{0}_{K^2 \times K^2}
\end{bmatrix}
\in \mathbb{R}^{d_{\text{model}} \times d_{\text{model}}},
$$
so that
$(\bs{Z}\bs{W}_A)(2, :) = \left(\bs{0}; \left(\bs{e}_a; \bs{e}_b\right); \bs{0}\right)$.

Next, let
$$
\bs{W}_B \coloneqq
\begin{bmatrix}
    \bs{0} & \bs{0} & \cdots & \bs{0} & \bs{0} & \cdots & \bs{0} & \cdots & \bs{0} \\
    \bs{e}_1^\top & \bs{e}_1^\top & \cdots & \bs{e}_1^\top & \bs{e}_2^\top & \cdots & \bs{e}_K^\top & \cdots & \bs{e}_K^\top \\
    \bs{e}_1^\top & \bs{e}_2^\top & \cdots & \bs{e}_K^\top & \bs{e}_1^\top & \cdots & \bs{e}_1^\top & \cdots & \bs{e}_K^\top \\
    \bs{0} & \bs{0} & \cdots & \bs{0} & \bs{0} & \cdots & \bs{0} & \cdots & \bs{0}
\end{bmatrix}
\in \mathbb{R}^{d_{\text{model}} \times K^2}.
$$
Equivalently, the $((i - 1)K + j)$-th column of $\bs{W}_B$ is $(\bs{0}; \bs{e}_i; \bs{e}_j; \bs{0})^\top \in \mathbb{R}^{d_{\text{model}}}$, and hence for any row vector $\bs{x} \in \mathbb{R}^{1\times d_{\text{model}}}$,
$(\bs{x}\bs{W}_B)((i - 1)K + j) = \bs{x}(K + i) + \bs{x}(2K + j)$.

Let $\bs{W}_1 \coloneqq \bs{W}_A\bs{W}_B$ and $\bs{b}_1 \coloneqq -\bs{1}$.
Then
$(\func{ReLU}(\bs{Z}\bs{W}_1 + \bs{1}\bs{b}_{1}^\top))(2, :) = \bs{e}_{(a - 1)K + b} \in \mathbb{R}^{1\times K^2}$.

Finally, choose the second-layer parameters as
$\bs{W}_2 \coloneqq
\begin{bmatrix}
    \bs{0}_{K^2 \times K} & \bs{0}_{K^2 \times 2K} & \bs{I}_{K^2}
\end{bmatrix}
\in \mathbb{R}^{K^2 \times d_{\text{model}}}$
and $\bs{b}_2 \coloneqq \bs{0}$.
Let
$$
\bs{X}^{(1)} \coloneqq \bs{Z} + \left(\func{ReLU}(\bs{Z}\bs{W}_1 + \bs{1}\bs{b}_1^\top)\bs{W}_2 + \bs{1}\bs{b}_2^\top\right)
$$
denote the FFN output with the residual connection.
Then
$$
\bs{X}^{(1)}(2, :) = \left(\bs{e}_b; \left(\bs{o}_2^{(\mathrm{prev})}; \bs{o}_2^{(\mathrm{self})}\right); \bs{e}_{(a - 1)K + b}\right).
$$

\paragraph{Step 4: Output.}
Define $\bs{W} \in \mathbb{R}^{d_{\text{model}}\times K}$ such that $\bs{W}(i, j) = 0$ for $i \in \set{1, 2, \cdots, 3K}$ and $j \in \set{1, 2, \cdots, K}$, and
$\bs{W}(3K + (i - 1)K + j, k) = \mathbb{I}[f(i, j) = k]$
for $i, j, k\in \set{1, 2, \cdots, K}$.
Let $\bs{Y} \coloneqq \bs{X}^{(1)}\bs{W}$.
Then for any $k \in \set{1, 2, \cdots, K}$,
$$
\bs{Y}(2, k) = \sum_{\substack{(i, j)\in \set{1, \cdots, K}^2:\\ f(i, j) = k}} \bs{X}^{(1)}\left(2, 3K + (i - 1)K + j\right)
= \bs{e}_{f(a, b)}(k).
$$
Therefore, the greedy decoding output $\arg\max \bs{Y}(2, :)$ is unique and equals $f(a, b)$.
\end{proof}

\section{Discussion on Modifying the Context Manager to Simulate Next-Two-Tokens Decoding}
\label{app:next-two-tokens-decoding}
In this appendix, we show how to simulate a $(2,2)$-restricted system while preserving the interface in which each Transformer call decodes a single token.
Let $\Sigma$ be a finite alphabet, and let $\texttt{<1>}, \texttt{<2>} \notin \Sigma$ be two control tokens.
Define the extended alphabet $\bar{\Sigma} \coloneqq \Sigma \cup \set{\texttt{<1>}, \texttt{<2>}}$.
Assume that for every function $f: \Sigma^2 \to \Sigma^2$, writing $f(a, b) = (f_1(a,b), f_2(a,b))$, there exists a context-window-$3$ Transformer over $\bar{\Sigma}$ such that for all $(a, b)\in \Sigma^2$, greedy decoding on input $(\texttt{<1>}, a, b)$ outputs $f_1(a, b)$, and greedy decoding on input $(\texttt{<2>}, a, b)$ outputs $f_2(a, b)$.
Under this assumption, we can simulate any $(2,2)$-restricted system.

Concretely, consider a $(2,2)$-restricted system specified by a function $M: \Sigma^2 \to \Sigma^2$.\footnote{Schuurmans et al.~\cite{schuurmans2024autoregressive} require the output length of $M$ to be at most $2$. If $M$ outputs the empty string, we treat it as $(\epsilon, \epsilon)$. If $M$ outputs a single token $a$, we treat it as $(a, \epsilon)$, where $\epsilon$ is a special empty symbol that is ignored when appending.}
Applying the assumption with $f \equiv M$, we obtain a Transformer $T$ that produces the two components of $M(a,b)$ via the control tokens $\texttt{<1>}$ and $\texttt{<2>}$.
Now consider step $t$ of the simulated $(2,2)$-restricted system, where the current window content is $(a, b)$.
The modified context manager first feeds $(\texttt{<1>}, a, b)$ to $T$ and appends the decoded token to the end of the current sequence.
It then feeds $(\texttt{<2>}, a, b)$ to $T$ and appends the decoded token as well.
Finally, it deletes the first token of the current sequence, thereby shifting the window forward by one position.
This procedure exactly reproduces one step of the $(2,2)$-restricted system using two single-token decoding calls.
Therefore, it suffices to establish that the assumption above holds.
We note that this follows by a direct extension of Lemma \ref{lem:binary-function} to the context-window-$3$ setting, and we omit the details.
%%%%%%%%%%%%%%%%%%%%%%%%%%%%%%%%%%%%%%%%%%%%%%%%%%%%%%%%%%%%%%%%%%%%%%%%%%%%%%%
%%%%%%%%%%%%%%%%%%%%%%%%%%%%%%%%%%%%%%%%%%%%%%%%%%%%%%%%%%%%%%%%%%%%%%%%%%%%%%%

\end{document}